\documentclass[10pt,twocolumn]{article}
\usepackage[T1]{fontenc}
\usepackage[utf8]{inputenc}
\usepackage[english]{babel}
\usepackage[super,sort,compress]{cite}
\usepackage{algorithm}
\usepackage{algorithmic}
\usepackage{url}
\usepackage{verbatim}
\usepackage{graphicx}
\usepackage{booktabs}
\usepackage{multirow}
\usepackage{amsmath}
\usepackage{amssymb}
\usepackage{amsthm}
\usepackage{xcolor}
\usepackage[a4paper,margin=0.72in]{geometry}
\usepackage{microtype}
\usepackage[hidelinks]{hyperref}
\usepackage{authblk}
\hypersetup{
  pdftitle={Noisy group neurons with synchronous resetting for high-performance spiking neural networks},
  pdfauthor={Yajie Zhai, Yanmei Kang, Meng Li, and Zigang Huang},
  pdfkeywords={spiking neural network, stochastic resonance, gradient mismatching, synchronous resetting, mean-field learning}
}
\newtheorem{theorem}{Theorem}
\newtheorem{proposition}{Proposition}
\newcommand{\keywords}[1]{\par\noindent\textbf{Keywords:} #1\par}
\makeatletter
\newenvironment{algorithmhere}[1][]{%
  \par\smallskip\noindent\begin{minipage}{\columnwidth}\hrule height 0.6pt\kern 3pt%
  \refstepcounter{algorithm}%
  \def\caption##1{\noindent\textbf{Algorithm \thealgorithm.} ##1\par\kern 3pt\hrule height 0.4pt\kern 3pt}%
}{\kern 3pt\hrule height 0.6pt\end{minipage}\par\smallskip}
\newenvironment{figurehere}{%
  \par\begingroup\parskip0pt\topskip 6pt plus 1pt minus 1pt\def\@captype{figure}%
}{%
  \par\endgroup\vskip 6pt plus 1pt minus 1pt%
}
\newenvironment{tablehere}{%
  \parskip0pt\topskip 6pt plus 1pt minus 1pt\def\@captype{table}\begin{small}\begin{center}%
}{%
  \end{center}\end{small}\vskip 6pt plus 1pt minus 1pt%
}
\makeatother

\begin{document}
\title{Noisy group neurons with synchronous resetting for high-performance spiking neural networks}
\author[1]{Yajie Zhai}
\author[1,2]{Yanmei Kang\thanks{Corresponding author: ymkang@xjtu.edu.cn}}
\author[3,4]{Meng Li}
\author[3,4]{Zigang Huang\thanks{Corresponding author: huangzg@xjtu.edu.cn}}
\affil[1]{Department of Applied Mathematics, School of Mathematics and Statistics, Xi'an Jiaotong University, Xi'an 710049, China}
\affil[2]{Center for Intersection of Mathematics and Life Sciences, Xi'an Jiaotong University, Xi'an 710049, China}
\affil[3]{The Key Laboratory of Biomedical Information Engineering of Ministry of Education, Institute of Health and Rehabilitation Science, School of Life Science and Technology, Xi'an Jiaotong University, Xi'an 710049, China}
\affil[4]{Research Center for Brain-inspired Intelligence, Xi'an Jiaotong University, Xi'an 710049, China}
\date{}
\makeatletter
\twocolumn[
\begin{@twocolumnfalse}
\maketitle
\begin{abstract}
Spiking neural networks (SNNs), characterized by bio-inspired neuronal dynamics and event-driven communication, have attained significant progress in recent years. Nevertheless, training deep SNNs remains challenging due to spatiotemporal information loss and gradient mismatching. To simultaneously address these issues, we propose a noisy group neuron (NGN) model, which incorporates population-level synchronous resetting and neural stochasticity as fundamental computational mechanisms. We then develop the NGN method as a framework that combines the NGN model with backpropagation learning based on mean-field dynamics. We demonstrate the advantages of the NGN method through theoretical analysis and experimental validation on CIFAR-10, CIFAR-100, Tiny-ImageNet, DVS-Gesture, N-Caltech101, and CIFAR10-DVS. The proposed approach achieves an accuracy of 87.35\% on CIFAR10-DVS within 10 inference time steps. These results support NGN as a practical approach to high-performance neuromorphic computing.
\end{abstract}
\keywords{spiking neural network; stochastic resonance; gradient mismatching; synchronous resetting; mean-field learning}
\vspace{1em}
\end{@twocolumnfalse}
]
\makeatother
\section{Introduction}

Spiking Neural Networks (SNNs) are recognized as the third generation of neural networks for their biological plausibility and event-driven sparse communication~\cite{ref1,ref2,refIJNSreview}. Inspired by human brains, spiking neurons, as the building blocks of SNNs, emulate neural information processing via the evolution of the membrane potential and spiking mechanism. The temporal dynamics make SNNs particularly advantageous for handling asynchronous, event-driven data streams from Address-Event Representation (AER) neuromorphic sensors, such as Dynamic Vision Sensors (DVS)~\cite{ref4}. Consequently, SNN-based algorithms have proven effective in applications such as vision sensing~\cite{ref5}, image recognition and segmentation, odor classification, policy optimization, multimodal inference, fault diagnosis~\cite{ref_eswa_iipooling}, and autonomous robotics~\cite{refMurase}. With the development of neuromorphic hardware, such as IBM's TrueNorth~\cite{ref11} and Intel's Loihi 2 chips~\cite{ref12}, SNNs have already become promising candidates for edge computing and low-power AI.

The existing SNNs primarily employ leaky integrate-and-fire (LIF) neurons to balance biological plausibility with computational simplicity~\cite{refIJNSlif}. This has enabled significant advances in neuromorphic computing, but it introduces two fundamental challenges that limit the training efficiency and performance of SNNs. The first challenge arises from information loss in spatiotemporal encoding by LIF neurons~\cite{ref19}. Due to the binary spike representation and single-channel connection architecture, LIF neurons face a representational dilemma in simultaneously encoding spatial intensity distributions and temporal dynamics~\cite{ref_fanMSF}. This limitation becomes particularly evident over brief temporal windows: when facing subthreshold stimuli of varying strengths, the absence of spike responses renders different intensity levels indistinguishable, despite producing distinct membrane potential waveforms. The second challenge stems from the gradient mismatching. The LIF neuron operates following a non-differentiable firing-and-resetting mechanism~\cite{ref13}, which prevents direct backpropagation in gradient-based supervised learning~\cite{ref14}. To overcome this issue, surrogate gradient learning (SGL) methods utilize smooth approximations of the derivative associated with spike generation~\cite{ref15}. Despite their significant success in training deep SNNs~\cite{ref15,ref16}, SGL methods inherently suffer from a critical inconsistency between the continuous gradients in backpropagation and the discrete binary spikes generated during forward inference. This forward--backward discrepancy creates gradient mismatching and reduces learning efficiency~\cite{ref17,ref18}.

Both challenges are interconnected manifestations of the same underlying structural constraints imposed by the all-or-none nature of spike generation in LIF neurons, and various overcoming techniques have been developed from two aspects. On one hand, efforts including adaptive smoothing mechanisms~\cite{ref18}, finite difference gradients~\cite{ref14}, surrogate module learning~\cite{ref47}, and complementary membrane potential dynamics~\cite{ref51} have achieved notable success in speeding up the convergence of training and alleviating the gradient mismatching. On the other hand, diverse neuronal models have been explored for enriching the expressive power, from learnable LIF neurons~\cite{ref22,ref50} and dynamically regulated neurons~\cite{ref_eswa_tf} to multi-compartment neurons~\cite{ref24} and parallel-neuron architectures~\cite{MLF2022,PSN2023}, so that information loss can be minimized. Along this direction, Adaptive Fission allocates weighted neuron groups to sensitive units for low-latency population coding~\cite{ref_adaptive_fission}, whereas multiplication-free parallelizable spiking neurons pursue efficient spatiotemporal dynamics~\cite{ref_mulfree_psn}. Nevertheless, although the methods in the first category enhance training convergence and gradient flow, they do not fundamentally address the information loss inherent in spatiotemporal encoding. The architectural modifications in the second category enrich expressive power, but they tend to introduce substantial optimization difficulties. These techniques therefore result in a problematic trade-off: improvements in one dimension come at the expense of the other, leaving most existing SNNs unable to achieve both efficient training and high-performance inference simultaneously. Therefore, a valuable trade-off between training efficiency and encoding accuracy remains necessary from fundamentally different architectural principles.

To seek such a trade-off, the proposed model starts from the collective response of a finite noisy-neuron array. Each member receives the same deterministic presynaptic input and an independent noise sample. Averaging their binary responses produces a graded spike with resolution $1/K$, while constructive noise can expose weak subthreshold differences through stochastic-resonance effects~\cite{ref23,ref25,ref26,ref27,ref28,ref29,ref39}. We further introduce population-level synchronous resetting. After the group response is aggregated, all members start the next time step from the same aggregated post-spike state. This operation couples the otherwise independent members and avoids tracking $K$ independent membrane histories. The response-dependent reduction of the subsequent shared state creates competitive suppression, a functional effect that has also been described as lateral inhibition in related work~\cite{ref35}.

Based on the above two mechanisms, we propose a noisy group neuron (NGN) to address spatiotemporal information loss and gradient mismatching simultaneously. The finite group response is an empirical firing probability, while the Gaussian surrogate used in backpropagation is the corresponding population probability. This correspondence enables us to quantify how the forward response approaches the backward matching signal as $K$ increases. The paper is structured as follows. Section 2 presents the NGN dynamics and synchronous-resetting implementation. Section 3 analyzes neuron-level stochastic resonance and information enhancement. Section 4 develops the mean-field learning algorithm and studies gradient mismatching. Section 5 reports classification, ablation, group-size, and computational-cost experiments. Section 6 gives the conclusion.

\section{Noisy group neuron model}
\subsection{Population dynamics and mean-field approximation}
Following the statistical-neurodynamics perspective~\cite{refAmari}, our group neuron model is developed based on an array of $K$ noisy identical independent LIF neurons. Let $V_k(t)$ denote the membrane potential of the $k$-th member neuron for $1 \leq k \leq K$. Then, the evolution of the membrane potential can be delineated as
\begin{equation}
	\label{eq1}
	\begin{aligned}
		\tau_m\,dV_k(t)
		&=\left[-V_k(t)+R_m I_0\right]dt+\sigma_0\,dW_k(t),\\
		&\hspace{3.1cm} V_k(t) \leq V_{th}.
	\end{aligned}
\end{equation}
where $\tau_m$ is the membrane time constant, $R_m$ is the membrane resistance constant, $V_{th}$ is a prescribed spike threshold, $I_0$ is the deterministic presynaptic current input, $dW_k(t)$ is the independent increment of Wiener process to describe the stochastic fluctuation, and $\sigma_0$ is usually referred to as the noise intensity. 

Once $V_k(t)$ reaches the threshold $V_{th}$ from below, a spike (namely, action potential) is emitted, the membrane potential is immediately returned to a resting potential $V_{re}$, and the evolution in Eq.~(\ref{eq1}) is restarted. Assuming that the $k$th neuron remains in the sub-threshold regime from $t$ to $t+\Delta t$, we obtain
\begin{equation}  
	\label{eq2}  
	\begin{aligned}  
		V_k(t+\Delta t) &= \left(1 - e^{-\frac{\Delta t}{\tau_m}}\right) R_m I_0 + V_k(t) e^{-\frac{\Delta t}{\tau_m}} \\
		&\quad + \frac{\sigma_0}{\tau_m} \int_t^{t+\Delta t} e^{-\frac{t+\Delta t - s}{\tau_m}} dW_k(s)  
	\end{aligned}  
\end{equation}
is a Gaussian process of zero mean. By Ito's isometry,
\begin{equation}
	\label{eq3}
	E\left(\frac{\sigma_0}{\tau_m} \int_t^{t+\Delta t} e^{\frac{-(t+\Delta t-s)}{\tau_m}} dW_k(s)\right)^2=\frac{\sigma_0^2}{2 \tau_m}\left(1-e^{-\frac{2 \Delta t}{\tau_m}}\right).
\end{equation}
If $\xi_k(t)=\frac{\sigma_0}{\tau_m} \int_t^{t+\Delta t} e^{-\frac{t+\Delta t-s}{\tau_m}} dW_k(s)$, then 
\begin{equation}
	\xi_k(t) \sim N\left(0,\frac{\sigma_0^2}{2\tau_m}
	\left(1-e^{-\frac{2\Delta t}{\tau_m}}\right)\right).
\end{equation}
We fix $R_m=1 /(1-\tau)$ with $\tau=e^{-\frac{\Delta t}{\tau_m}}$ and denote $V_k(t+\Delta t)$ as $V_k^{t+1}$ for simplicity of notation. Then, by Eq.~(\ref{eq2}), we have
\begin{equation}
	\label{eq4}
	V_k^{t+1}=\tau V_k^t+I_0+\sigma \eta_k^t,
\end{equation}
where $\sigma=\sigma_0 \sqrt{\frac{1}{2\tau_m}\left(1-e^{-\frac{2 \Delta t}{\tau_m}}\right)}$ and $\eta_k^t \sim N(0,1)$. Here, we take $\Delta t$ as a hyperparameter such that $\Delta t<\tau_m$ \cite{ref36} and the $\Delta t$-dependent noise intensity $\sigma$ can ensure that the discrete model statistically resembles its continuous counterpart. 

Let $O_k^t$ represent the spike of the $k$th neuron at time $t$, such that $O_k^t=\Theta\left(V_k^t-V_{th}\right)$. Incorporating the hard-resetting mechanism into Eq.~(\ref{eq4}) yields
\begin{equation}
	\label{eq5}
	V_k^{t+1}=\tau H_k^t+I_0+\sigma \eta_k^t, \quad 1 \leq k \leq K,
\end{equation}
where $H_k^t=V_k^t\left(1-O_k^t\right)+V_{re} \delta_{1, O_k^t}$ and $\delta_{1, O_k^{\prime}}$ is Dirac notation. However, it is computationally expensive to track the dynamics of each sub-neuron in Eq.~(\ref{eq5}). We therefore model the collective behavior through a mean-field approximation. Averaging both sides of Eq.~(\ref{eq2}) over $k=1,2, \ldots, K$ gives
\begin{equation}
	\label{eq6}
	\begin{aligned}  
		\bar{V}(t+\Delta t)=&\left(1-e^{-\frac{\Delta t}{\tau_m}}\right) R_m I_0+\bar{V}(t) e^{-\frac{\Delta t}{\tau_m}}+\\
		&\frac{\sigma_0}{\tau_m} \int_t^{t+\Delta t} e^{\frac{-(t+\Delta t-s)}{\tau_m}} \varsigma(s) d s,
	\end{aligned}
\end{equation}
where $\varsigma(s)=\frac{1}{K} \sum_{k=1}^K dW_k(s)$ and $\bar{V}(t)$ is the average membrane potential.

By the law of large numbers, the arithmetic average $\varsigma(s)\xrightarrow{p}0$ as $K\rightarrow\infty$. Therefore, taking the limit $K\rightarrow\infty$ in Eq. (\ref{eq6}), we obtain the mean-field dynamics
\begin{equation}
	\label{eq7}
	\bar{V}(t+\Delta t)=\left(1-e^{-\frac{\Delta t}{\tau_m}}\right) R_m I_0+\bar{V}(t) e^{-\frac{\Delta t}{\tau_m}},
\end{equation}
in the large $K$ limit.

With the hard-threshold resetting mechanism and the simplified notation in Eq. (\ref{eq5}) taken into account, Eq. (\ref{eq7}) can be simplified into
\begin{equation}
	\label{eq9}
	\bar{V}^{t+1}=\tau\lim_{K\rightarrow\infty}\bar{H}_{K}^t+I_0,
\end{equation}
where
\begin{equation*}
	\bar{H}_{K}^t=\frac{1}{K}\sum_{k=1}^K V_k^t\left(1-O_k^t\right)
	+\frac{V_{re}}{K}\sum_{k=1}^K\delta_{1,O_k^t}
\end{equation*}
is the average resetting voltage across all subneurons. To further simplify Eq.~(\ref{eq9}), we employ a mean-field approximation~\cite{ref37}. When $V_{re}=0$, Eq.~(\ref{eq9}) can be approximated as
\begin{equation}
	\label{eq11}
	\bar{V}^{t+1}=\tau\bar{V}^t\left(1-\bar{O}^t\right)+I_0,
\end{equation}
where $\bar{O}^t$ is the average firing rate of subneurons and $\bar{V}^t$ is the average membrane potential at time $t$. Here the approximation relies on the mean-field assumption that the membrane potentials of subneurons $V_k^t\sim\mathcal{N}(\bar{V}^t,\sigma^2)$. Then, in the large $K$ limit, it is obvious that 
\begin{equation}
\begin{aligned}
	\lim_{K\to\infty}\bar{H}_K^t
	={}&\bar{V}^t\left(1-\bar{O}^t\right)
	-\frac{\sigma}{\sqrt{2\pi}}
	\exp\left(-\frac{(V_{th}-\bar{V})^2}{2\sigma^2}\right)\\
	&+V_{re}\bar{O}^t.
\end{aligned}
\end{equation}
Notice that $\frac{\sigma}{\sqrt{2 \pi}} \exp \left(-\frac{\left(V_{th}-\bar{V}\right)^2}{2 \sigma^2}\right) \approx 0$ when $\sigma<1$. Then when $V_{re}=0$, we get the mean-field equation~(\ref{eq11}) from Eq.~(\ref{eq9}).
\subsection{Finite-size sampling implementation}
The mean-field approximation in Section 2.1 characterizes the idealized limit $K\rightarrow\infty$. In implementation, we use a finite population of $K$ members to trade sampling accuracy against computation and memory.

With the above preparation, we formulate the finite-size sampling version of the NGN model. Let $v_k^{t,n}$ and $o_k^{t,n}$ denote the voltage and spike output of member $k$ on layer $n$ at time $t$. The noisy presynaptic input is $\tilde{I}_k^{t,n-1}=I^{t,n-1}+\sigma\eta_k^{t,n-1}$, where $I^{t,n-1}$ is shared by the group and $\eta_k^{t,n-1}\sim N(0,1)$ is sampled independently for each member.

Similarly to Eq.~(\ref{eq5}), when $V_{re}=0$, the voltage and spike of each member neuron are calculated as
\begin{equation}
	\label{eq12}
	v_k^{t, n}=\tau h_k^{t-1, n}+\tilde{I}_k^{t, n-1}, o_k^{t, n}=\Theta\left(v_k^{t, n}-V_{th}\right),
\end{equation}
where $h_k^{t-1, n}$ is the resetting membrane potential of member neuron $k$. The averaged membrane potential $\bar{v}^{t, n}$ and aggregated firing rate $\bar{o}^{t, n}$ within the $K$-member NGN model are computed as
\begin{equation}
	\label{eq13}
	\bar{v}^{t, n}=\tau \bar{h}^{t-1, n}+I^{t, n-1}, \quad \bar{o}^{t, n}=\frac{1}{K} \sum_{k=1}^K o_k^{t, n}
\end{equation}
with the synchronous-resetting state $\bar{h}^{t,n}=\bar{v}^{t,n}(1-\bar{o}^{t,n})$. Here, $\bar{o}^{t,n}$ is a graded spike taking values in $\{0,\frac{1}{K},\ldots,1\}$ rather than a binary spike. Such graded event representations are supported by neuromorphic hardware including Intel Loihi 2~\cite{ref12}.

To couple the population through synchronous resetting and reduce state-tracking complexity, we modify Eq.~(\ref{eq12}) as
\begin{equation}
	\label{eq14}
	v_k^{t, n}=\tau \bar{h}^{t-1, n}+\tilde{I}_k^{t, n-1}, o_k^{t, n}=\Theta\left(v_k^{t, n}-V_{th}\right).
\end{equation}
Replacing each $h_k^{t-1,n}$ with $\bar{h}^{t-1,n}$ synchronously initializes all members from one post-spike state. A larger $\bar{o}^{t,n}$ more strongly reduces the next shared state, producing competitive suppression without an explicit inhibitory circuit; related suppression has also been termed lateral inhibition~\cite{ref35}. Eqs.~(\ref{eq13})--(\ref{eq14}) and Fig.~\ref{fig_1} define the resulting graded response and compact recurrence. NGN reduces to stochastic LIF at $K=1$ and to deterministic LIF as $\sigma\to0$.
\section{Effects of aperiodic stochastic resonance}
We use mutual information to quantify stochastic-resonance effects in NGN and relate neuron-level encoding to network performance~\cite{ref19,ref38,ref39}.

Let $M I\left(I^t, o^t\right)$ be the mutual information between the presynaptic input $I^t$ and the output spike $o^t$ at time $t$, then we have
\begin{equation}
	\label{eq15}
	M I\left(I^t, o^t\right)=H\left(o^t\right)-H\left(o^t \mid I^t\right),
\end{equation}
where $H(o^t)=-\sum_o p(o^t)\log_2p(o^t)$ and $H(o^t\mid I^t)=\int f_{I^t}(y)H(o^t\mid I^t=y)dy$. Thus, $MI(I^t,o^t)$ measures how much input information is retained by the neuronal response.

\begin{figurehere}
	\centering
	\includegraphics[width=\columnwidth]{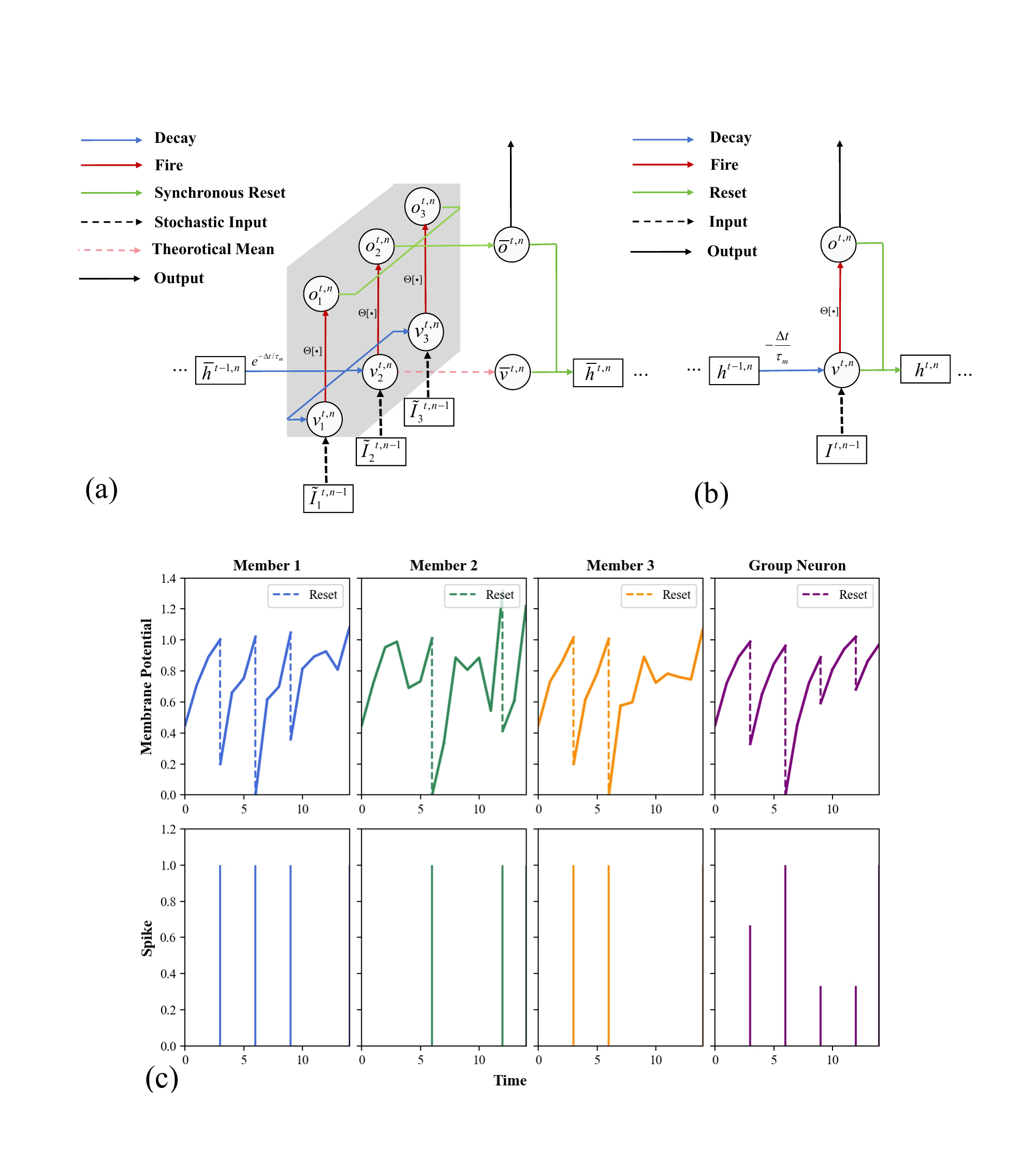}
	\caption{NGN architecture and temporal response for $K=3$. (a) Members integrate independently perturbed inputs from a shared post-resetting state; their averaged spikes determine the next shared state. (b) Conventional LIF dynamics. (c) Example member and group responses for $\Delta t=0.5$, $I^t=0.4$, $\tau_m=1$, $V_{th}=1$, and $V_{re}=0$; dashed segments denote resetting.}
	\label{fig_1}
\end{figurehere}

For NGN models, $I^{t, n-1}$ is the presynaptic input from the $(n-1)$ th layer at time $t, \bar{o}^{t, n}$ is the output spike state and $\bar{v}^{t, n}$ is the spatially accumulated membrane potential of the $n$th layer at time $t$. Then it can be deduced that $M I\left(I^{t, n-1}, \bar{o}^{t, n}\right)=M I\left(\bar{v}^{t, n}, \bar{o}^{t, n}\right)$ by the following Proposition \ref{pro1}.

\begin{proposition}\label{pro1}
	Assume that the presynaptic inputs $I^{t,n-1}= I \sim N(\mu_I, \sigma_I^2)$ for all $t \geq 1$. Given the observed spike values 
	$\bar{o}^{1,n}, \bar{o}^{2,n}, \ldots, \bar{o}^{t-1,n}$ of one NGN model, 
	the membrane potential $\bar{v}^{t,n}$ approximately follows 
	$N(\mu_v, \sigma_v^2)$ where $\sigma_v^2 \propto \sigma_I^2, \mu_v \propto \mu_I$.
\end{proposition}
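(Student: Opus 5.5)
The plan is to unroll the NGN recurrence in Eqs.~(\ref{eq13})--(\ref{eq14}) and condition on the observed spike values. Write $I=I^{t,n-1}$ for the common input, and recall that $I$ is the same for all $t$ by assumption. With $\bar{h}^{0,n}=0$, the definition $\bar{h}^{t,n}=\bar{v}^{t,n}(1-\bar{o}^{t,n})$ gives
\begin{equation*}
\bar{v}^{t,n}=\tau(1-\bar{o}^{t-1,n})\,\bar{v}^{t-1,n}+I .
\end{equation*}
Once $\bar{o}^{1,n},\ldots,\bar{o}^{t-1,n}$ are treated as given constants $c_s=\tau(1-\bar{o}^{s,n})$, induction on $t$ yields
\begin{equation*}
\bar{v}^{t,n}=a_t\,I,\qquad a_t=1+\sum_{j=1}^{t-1}\prod_{s=t-j}^{t-1}c_s ,
\end{equation*}
with $a_1=1$ and $a_t=1+c_{t-1}a_{t-1}$. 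The coefficient $a_t$ is deterministic given the observed spikes, and $1\le a_t\le 1/(1-\tau)$. Conditioned on the spike history, $\bar{v}^{t,n}$ is therefore an affine (here linear) image of $I$.

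The next step is to apply the elementary fact that a linear image of a Gaussian is Gaussian. This gives $\bar{v}^{t,n}\sim N(a_t\mu_I,\,a_t^2\sigma_I^2)$, so $\mu_v=a_t\mu_I\propto\mu_I$ and $\sigma_v^2=a_t^2\sigma_I^2\propto\sigma_I^2$, with the same history-dependent constant of proportionality. In the finite-$K$ implementation, the member noises $\sigma\eta_k$ enter only through the spikes $o_k$, because $\bar{v}$ in Eq.~(\ref{eq13}) uses the noiseless shared input. Hence they affect $\bar{v}$ only via the conditioned quantities $\bar{o}^{s,n}$.

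The main obstacle is the word ``approximately''. The spikes $\bar{o}^{s,n}$ are themselves functions of $I$: they equal thresholded values of $a_s I+\sigma\eta_k$. Conditioning on them therefore truncates and reweights the law of $I$, and the conditional law of $I$ is a Gaussian multiplied by a product of binomial likelihoods in $\Phi((a_sI-V_{th})/\sigma)$, not an exact Gaussian. I would handle this by treating the observed spikes as fixing the coefficients $c_s$ (the mean-field viewpoint of Section~2.1) and neglecting the information they carry about $I$, in the same spirit as the approximation used for Eq.~(\ref{eq11}). The justification is that each $\bar{o}^{s,n}$ takes only $K+1$ values and its likelihood varies smoothly in $I$ when $\sigma$ is not small. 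I would state explicitly that the result is exact for the affine map and approximate only through this decoupling.

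Finally, the identity $MI(I^{t,n-1},\bar{o}^{t,n})=MI(\bar{v}^{t,n},\bar{o}^{t,n})$ follows because, conditionally on the history, $\bar{v}^{t,n}=a_tI$ with $a_t>0$ is a bijection. Mutual information is invariant under such a reparametrisation of $I$.
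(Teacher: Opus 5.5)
Your proof is correct and is essentially the argument the paper intends, namely the extension of the deterministic-LIF Gaussian-input result it cites: you freeze the observed graded spikes as coefficients $c_s=\tau(1-\bar{o}^{s,n})$, unroll the synchronous-reset recurrence to $\bar{v}^{t,n}=a_tI$, and read off $N(a_t\mu_I,a_t^2\sigma_I^2)$, with ``approximately'' covering the decoupling you identify. The one thing to fix is your heuristic for that decoupling: the history is nearly uninformative about $I$ only roughly when $K\sigma_I^2\sum_{s<t}a_s^2\ll\sigma^2$, so the approximation gets worse as $K$ grows (as $K\to\infty$, $\bar{o}^{1,n}\to\Phi((I-V_{th})/\sigma)$ alone determines $I$), which is the opposite of what your remark about $\bar{o}^{s,n}$ taking only $K+1$ values suggests.
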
 
\begin{proof}
\end{proof}

Proposition \ref{pro1} demonstrates that the aggregated membrane potential has the same distribution type as the presynaptic input when the presynaptic input from the previous layer is consistent in time. This assumption and the property can be seen as an extension of the deterministic LIF neuron with Gaussian input \cite{ref40}. Nevertheless, the spatially aggregated spike $\bar{o}^{t,n}$ follows a different distribution, as stated in Proposition 2.
\begin{proposition}\label{pro2}
	For the given membrane potential $\overline{v}^{t, n}$, we have $K \bar{o}^{t, n} \sim B\left(K, p^{t, n}\right)$. That is, the probability mass function for $\bar{o}^{t, n}$ has the following form
	\begin{equation}
		\label{eq16}
		\begin{aligned}
			P\left\{\bar{o}^{t, n}=\frac{k}{K}\right\}&\triangleq f_{\bar{o}^{t, n}}\left(\left.\frac{k}{K} \right\rvert\,\bar{v}^{t, n}\right)\\
			&=\binom{K}{k}\left(p^{t, n}\right)^k\left(1-p^{t, n}\right)^{K-k}
		\end{aligned}
	\end{equation}
where $K$ denotes the group size of the NGN model and $p^{t, n}$ is the probability that a firing event occurs at time $t$, given by
	\begin{equation}
		\label{eq17}
		p^{t, n} \triangleq p_k^{t, n}
		=1-F_\eta\left(\frac{V_{th}-\bar{v}^{t, n}}{\sigma_\eta}\right).
	\end{equation}
	Here $F_\eta$ is the cumulative density function of the injected Gaussian white noise $\eta$ of intensity $\sigma_\eta$.	
\end{proposition}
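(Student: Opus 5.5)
The plan is to condition on $\bar{v}^{t,n}$ and read off the structure of the members' spikes directly from the finite-size recurrence, Eqs.~(\ref{eq13})--(\ref{eq14}).

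First, fix $\bar{v}^{t,n}$ (equivalently, fix the shared state $\bar{h}^{t-1,n}$ and the deterministic input $I^{t,n-1}$). By Eq.~(\ref{eq14}) and $\tilde{I}_k^{t,n-1}=I^{t,n-1}+\sigma\eta_k^{t,n-1}$, each member potential can be written as
\begin{equation*}
v_k^{t,n}=\tau\bar{h}^{t-1,n}+I^{t,n-1}+\sigma\eta_k^{t,n-1}=\bar{v}^{t,n}+\sigma\eta_k^{t,n-1}.
\end{equation*}
The point is that synchronous resetting gives every member the \emph{same} deterministic part $\bar{v}^{t,n}$. Without it, one would have individual states $h_k^{t-1,n}$ as in Eq.~(\ref{eq12}), and the members would no longer be identically distributed.

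Next, compute the firing probability of a single member:
\begin{equation*}
P\{o_k^{t,n}=1\mid\bar{v}^{t,n}\}=P\{\sigma\eta_k^{t,n-1}\ge V_{th}-\bar{v}^{t,n}\}=1-F_\eta\!\left(\frac{V_{th}-\bar{v}^{t,n}}{\sigma_\eta}\right).
\end{equation*}
Here $\sigma_\eta=\sigma$ and $F_\eta$ is the standard Gaussian CDF (continuity makes the boundary event irrelevant). This value does not depend on $k$, which gives Eq.~(\ref{eq17}).

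Then establish conditional independence. The noise samples $\eta_k^{t,n-1}$, $k=1,\dots,K$, are drawn independently of one another and of everything that determined $\bar{h}^{t-1,n}$ and $I^{t,n-1}$, since past noise enters only through the shared state. Hence, given $\bar{v}^{t,n}$, the $o_k^{t,n}$ are i.i.d.\ Bernoulli$(p^{t,n})$. Their sum $K\bar{o}^{t,n}=\sum_k o_k^{t,n}$ is therefore $B(K,p^{t,n})$, and Eq.~(\ref{eq16}) follows by the substitution $\bar{o}^{t,n}=k/K$.

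The main obstacle is the independence claim. I would justify it by noting that $\bar{v}^{t,n}$ is measurable with respect to the noise up to time $t-1$ and the current shared input. The current input $I^{t,n-1}$ is itself a function of earlier-layer noises, but those are independent of the fresh $\eta_k^{t,n-1}$ injected at layer $n$. Conditioning on $\bar{v}^{t,n}$ therefore leaves the fresh noises unaffected.
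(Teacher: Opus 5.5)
Your proposal is correct and is essentially the argument the paper relies on. Synchronous resetting gives every member $v_k^{t,n}=\bar v^{t,n}+\sigma\eta_k^{t,n-1}$, so given $\bar v^{t,n}$ the spikes $o_k^{t,n}$ are i.i.d.\ Bernoulli with the Gaussian tail probability $1-F_\eta\bigl((V_{th}-\bar v^{t,n})/\sigma_\eta\bigr)$, and their sum $K\bar o^{t,n}$ is therefore binomial.

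Two points in your write-up are what make the statement precise. The first is your justification that the fresh noises are independent of everything that determines $\bar v^{t,n}$: past layer-$n$ noise enters only through $\bar h^{t-1,n}$, and earlier-layer noise only through $I^{t,n-1}$. The second is your reading of $F_\eta$ as the standard normal CDF with $\sigma_\eta=\sigma$. That reading is consistent with the paper's later Gaussian-surrogate derivation.
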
 

Building upon Propositions \ref{pro1} and \ref{pro2}, we obtain the analytical form of mutual information for the NGN model in Theorem \ref{the1}.
\begin{theorem}\label{the1}
	Assume that the presynaptic input $I^{t, n-1}\sim N(\mu_I,\sigma_I^2)$ are i.i.d. for all $t \geq 1$. The mutual information $MI(I^{t, n}, \bar{o}^{t, n})$  between $I^{t, n-1}$ and graded spike $\bar{o}^{t,n}$ in the NGN model is
	\begin{equation}\label{eq18}  
		\begin{aligned}  
			MI(I^{t, n-1}, \bar{o}^{t, n})	 &=   
			- \sum_{k=0}^K f_{\bar{o}^{t, n}}\left(\frac{k}{K}\right)   
			\log_2\left(\frac{f_{\bar{o}^{t, n}}\left(\frac{k}{K}\right)}{\binom{K}{k}}\right) \\
			& + K \int_{-\infty}^{\infty} f_{\bar{v}^{t, n}}(v)   
			p^{t, n} \log_2 p^{t, n} \, dv \\
			&+ K \int_{-\infty}^{\infty} f_{\bar{v}^{t, n}}(v)   
			\left(1 - p^{t, n}\right) \\
			&\quad \times \log_2\left(1 - p^{t, n}\right)\,dv,  
		\end{aligned}  
	\end{equation}
	where $f_{\bar{o}^{t, n}}\left(\frac{k}{K}\right)$ in the first term is
	\begin{equation*}  
		\begin{aligned}  
			f_{\bar{o}^{t, n}}\left(\frac{k}{K}\right) = &  
			\frac{\sigma_\eta}{\sqrt{\pi} \sigma_v}   
			\binom{K}{k}   
			\int_{-\infty}^{\infty}h(u)^k \left(1 - h(u)\right)^{K-k}\\  
			&\exp\left(  
			-\left(  
			\frac{\sigma_\eta u}{\sigma_v}   
			+ \frac{V_{\text{th}} - \mu_v}{\sqrt{2} \sigma_v}  
			\right)^2  
			\right)   
			\, du,  
		\end{aligned}  
	\end{equation*} 
	with $h(u)=\frac{1}{2} \operatorname{erfc}(-u)$ and $f_{\bar{v}^{t n}}(v)$ in the second term reads
	\begin{equation*}
		f_{\bar{v}^{t,n}}(v)=\frac{1}{\sqrt{2 \pi} \sigma_v} \exp \left(-\frac{\left(v-\mu_v\right)^2}{2 \sigma_v^2}\right),
	\end{equation*}
	with $\sigma_v \propto \sigma_I$,$\mu_v \propto \mu_I$.
\end{theorem}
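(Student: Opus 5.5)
We use the decomposition (\ref{eq15}), $MI=H(\bar{o}^{t,n})-H(\bar{o}^{t,n}\mid\cdot)$, and compute the two entropies separately. The first step is to replace the conditioning variable $I^{t,n-1}$ by $\bar{v}^{t,n}$. By Proposition \ref{pro1}, conditioned on the past spikes, $\bar{v}^{t,n}$ is an affine image of the shared input $I$ plus the deterministic term $\tau\bar{h}^{t-1,n}$. Hence $\bar{v}^{t,n}\sim N(\mu_v,\sigma_v^2)$ with $\sigma_v\propto\sigma_I$ and $\mu_v\propto\mu_I$. The map $I\mapsto\bar{v}$ is injective, so $H(\bar{o}\mid I)=H(\bar{o}\mid\bar{v})$ and $MI(I^{t,n-1},\bar{o}^{t,n})=MI(\bar{v}^{t,n},\bar{o}^{t,n})$. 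This justifies writing everything in terms of $f_{\bar{v}^{t,n}}$.

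\emph{Conditional entropy.} By Proposition \ref{pro2}, given $\bar{v}^{t,n}=v$ we have $K\bar{o}^{t,n}\sim B(K,p)$ with $p=p^{t,n}(v)$. Expanding the entropy of the binomial gives
\[
H(\bar{o}\mid\bar{v}=v)=-\sum_k f(k/K\mid v)\log_2\binom{K}{k}-\sum_k f(k/K\mid v)\big[k\log_2p+(K-k)\log_2(1-p)\big].
\]
The bracketed part has expectation $Kp\log_2p+K(1-p)\log_2(1-p)$ under the binomial law. Integrating against $f_{\bar{v}}$ then produces exactly the second and third terms of (\ref{eq18}), with signs flipped because we subtract $H(\bar{o}\mid\bar{v})$. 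The remaining piece is $-\int f_{\bar{v}}\sum_k f(k/K\mid v)\log_2\binom{K}{k}\,dv=-\sum_k f_{\bar{o}}(k/K)\log_2\binom{K}{k}$, since $\binom{K}{k}$ does not depend on $v$.

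\emph{Marginal entropy and its combination.} The marginal entropy is $H(\bar{o})=-\sum_k f_{\bar{o}}(k/K)\log_2 f_{\bar{o}}(k/K)$. Subtracting the binomial-coefficient piece of $H(\bar{o}\mid\bar{v})$ from $H(\bar{o})$ gives $-\sum_k f_{\bar{o}}(k/K)\log_2\big(f_{\bar{o}}(k/K)/\binom{K}{k}\big)$, which is the first term of (\ref{eq18}).

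\emph{Explicit marginal.} It remains to compute the marginal $f_{\bar{o}}(k/K)=\int f_{\bar{v}}(v)\binom{K}{k}p^k(1-p)^{K-k}\,dv$. With Gaussian noise, $p=1-\Phi\big((V_{th}-v)/\sigma_\eta\big)=\tfrac12\operatorname{erfc}\big((V_{th}-v)/(\sqrt2\sigma_\eta)\big)$. Substitute $u=(v-V_{th})/(\sqrt2\sigma_\eta)$, so that $p=h(u)=\tfrac12\operatorname{erfc}(-u)$ and $dv=\sqrt2\sigma_\eta\,du$. The Gaussian density becomes
\[
\frac{1}{\sqrt{2\pi}\sigma_v}\exp\!\Big(-\Big(\tfrac{\sigma_\eta u}{\sigma_v}+\tfrac{V_{th}-\mu_v}{\sqrt2\sigma_v}\Big)^2\Big),
\]
and multiplying by the Jacobian $\sqrt2\sigma_\eta$ gives the prefactor $\sigma_\eta/(\sqrt{\pi}\sigma_v)$, as stated.

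\emph{Main obstacle.} The delicate step is the first one: checking that the conditioning in Proposition \ref{pro1} on the past graded spikes, together with the i.i.d.\ assumption, legitimately makes $\bar{v}^{t,n}$ Gaussian and a bijective function of $I^{t,n-1}$. This is what lets the two mutual informations be identified. I would state it as holding within the approximation of Proposition \ref{pro1}, treating $\bar{h}^{t-1,n}$ as fixed given the observed history. The remaining steps are algebra on binomial entropies plus the change of variables.
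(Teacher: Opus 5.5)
Your proposal is correct and follows the route the paper sets up. You use Proposition~\ref{pro1} to replace $I^{t,n-1}$ by $\bar{v}^{t,n}$, giving $MI(I^{t,n-1},\bar{o}^{t,n})=MI(\bar{v}^{t,n},\bar{o}^{t,n})$ with Gaussian $f_{\bar{v}^{t,n}}$; the binomial law of Proposition~\ref{pro2} splits $H(\bar{o}^{t,n}\mid\bar{v}^{t,n})$ into the $\log_2\binom{K}{k}$ piece (which combines with $H(\bar{o}^{t,n})$ into the first term) and the Bernoulli-entropy integrals; and the substitution $u=(v-V_{th})/(\sqrt{2}\sigma_\eta)$ gives the explicit marginal. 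Your algebra and Jacobian check out, and your caveat about treating $\bar{h}^{t-1,n}$ as fixed given the history is the same approximation the paper makes implicitly through Proposition~\ref{pro1}.
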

\begin{proof}
\end{proof}

\begin{table*}[htbp!]
	\centering
	\caption{The relationship between neuronal mutual information and network performance.}
	\begin{tabular}{llccccc}
		\hline
		\textbf{Neuron} & \textbf{Metric} & $\sigma=0$ & $\sigma=0.25$ & $\sigma=0.5$& $\sigma=0.75$ & $\sigma=1$ \\
		\hline
		\multirow{2}{*}{Net neuron 1} & MI & 0.30 & 0.43 & \textbf{0.47} & 0.44 & 0.37 \\
		& Acc(\%) & 91.19 & 93.22 & \textbf{94.50} & 70.27 & 9.23 \\
		\hline
		\multirow{2}{*}{Block1.neuron1} & MI & 0.21 & 0.32 & \textbf{0.35} & 0.33 & 0.28 \\
		& Acc(\%) & 92.37 & 93.60 & \textbf{94.50} & 76.20 & 9.09 \\
		\hline
		\multirow{2}{*}{Block1.neuron2} & MI & 0.41 & 0.54 & \textbf{0.55} & 0.49 & 0.41 \\
		& Acc(\%) & 92.12 & 93.82 & \textbf{94.50} & 79.75 & 10.12 \\
		\hline
		\multirow{2}{*}{Block2.neuron1} & MI & 0.20 & 0.30 &\textbf{0.34} & 0.32 & 0.27 \\
		& Acc(\%) & 91.08 & 93.22 & \textbf{94.50} & 83.52 & 14.13 \\
		\hline
		\multirow{2}{*}{Block2.neuron2} & MI & 0.34 & 0.47 & \textbf{0.48} & 0.45 & 0.38 \\
		& Acc(\%) & 91.58 & 93.35 & \textbf{94.50} & 85.86 & 13.99 \\
		\hline
	\end{tabular}
	\label{tab0}
\end{table*}

To validate Theorem \ref{the1}, we have recorded both Gaussian and non-Gaussian presynaptic inputs $I^{t, n-1}$ from a well-trained ResNet-18 during the validation stage for classification of CIFAR-10. Then the mutual information $MI\left(I^{t, n-1}, \bar{o}^{t, n}\right)$ is computed to characterize the stochastic resonance behavior of the NGN model under different inputs. We first consider the Gaussian presynaptic inputs (Fig.~\ref{fig2}(a)), where the Q-Q plot confirms the Gaussian nature of the distribution. As shown in Fig.~\ref{fig2}(b)--(c), the theoretical predictions (solid curves) agree well with the simulation results (dashed curves). Notably, mutual information exhibits a non-monotonic dependence on noise intensity when $K \geq 2$, indicating the occurrence of stochastic resonance. In contrast, this phenomenon is absent in the single neuron model ($K = 1$). This demonstrates a key advantage of the nontrivial NGN model. 

For non-Gaussian inputs that cannot be described by Theorem \ref{the1}, we recorded presynaptic inputs from the trained ResNet-18. The Q-Q plot in Fig.~\ref{fig2}(d) clearly reveals non-Gaussianity. We compute the mutual information through direct numerical simulation, and as shown in Fig.~\ref{fig2}(e)--(f), stochastic resonance still emerges only when $K \geq 2$. This confirms that noise can enhance information transmission in the NGN model more effectively than in the single LIF model, regardless of the input distribution. Moreover, by comparing Fig.~\ref{fig2}(c) and Fig.~\ref{fig2}(f), we observe that the optimal noise intensity remains relatively consistent for both Gaussian inputs and for non-Gaussian inputs across the time domain. This consistency across different input distributions and iterations suggests that noise intensity can be reasonably fixed as a hyperparameter in our algorithm design.

Two remarks are pertinent. First, stochastic resonance tends to occur only for nontrivial groups ($K > 1$), resembling suprathreshold stochastic resonance~\cite{ref28}. Unlike a parallel array whose members retain independent histories~\cite{ref41,ref42}, synchronous resetting makes the next effective baseline depend on the current aggregated response. Second, parameter-induced stochastic resonance can be obtained by tuning model parameters rather than noise intensity~\cite{ref23,ref26,ref28}. Because connection weights are learned while noise intensity is treated as a hyperparameter, NGN can be interpreted through this parameter-induced suprathreshold-resonance perspective.

We further demonstrate that the enhancement of neuronal information encoding capability directly translates to improved network performance through a controlled experimental design. We maintain a constant noise intensity for all layers preceding the target neuron while systematically varying the noise intensity $\sigma$ for the target neuron and subsequent layers. Table~\ref{tab0} presents the mutual information and classification accuracy of ResNet-18 under different noise intensities, measured at various neuronal layers. 

The results show a consistent trend across the examined neurons, from the input layer (Net neuron 1) to deeper residual blocks (Block 1 and Block 2). Both mutual information and network accuracy exhibit a single-peak dependence on noise intensity: they increase with noise intensity, reach a maximum at $\sigma=0.5$, and then decrease at higher noise levels. This pattern is consistent with stochastic resonance in the NGN model. In particular, the noise intensity $\sigma=0.5$ that maximizes mutual information also yields the best classification accuracy (94.50\%). The agreement between neuronal-level information transmission and network-level performance suggests that enhanced information encoding is associated with improved classification performance.

\begin{figure*}[!t]
	\centering
	\includegraphics[width=5.7in]{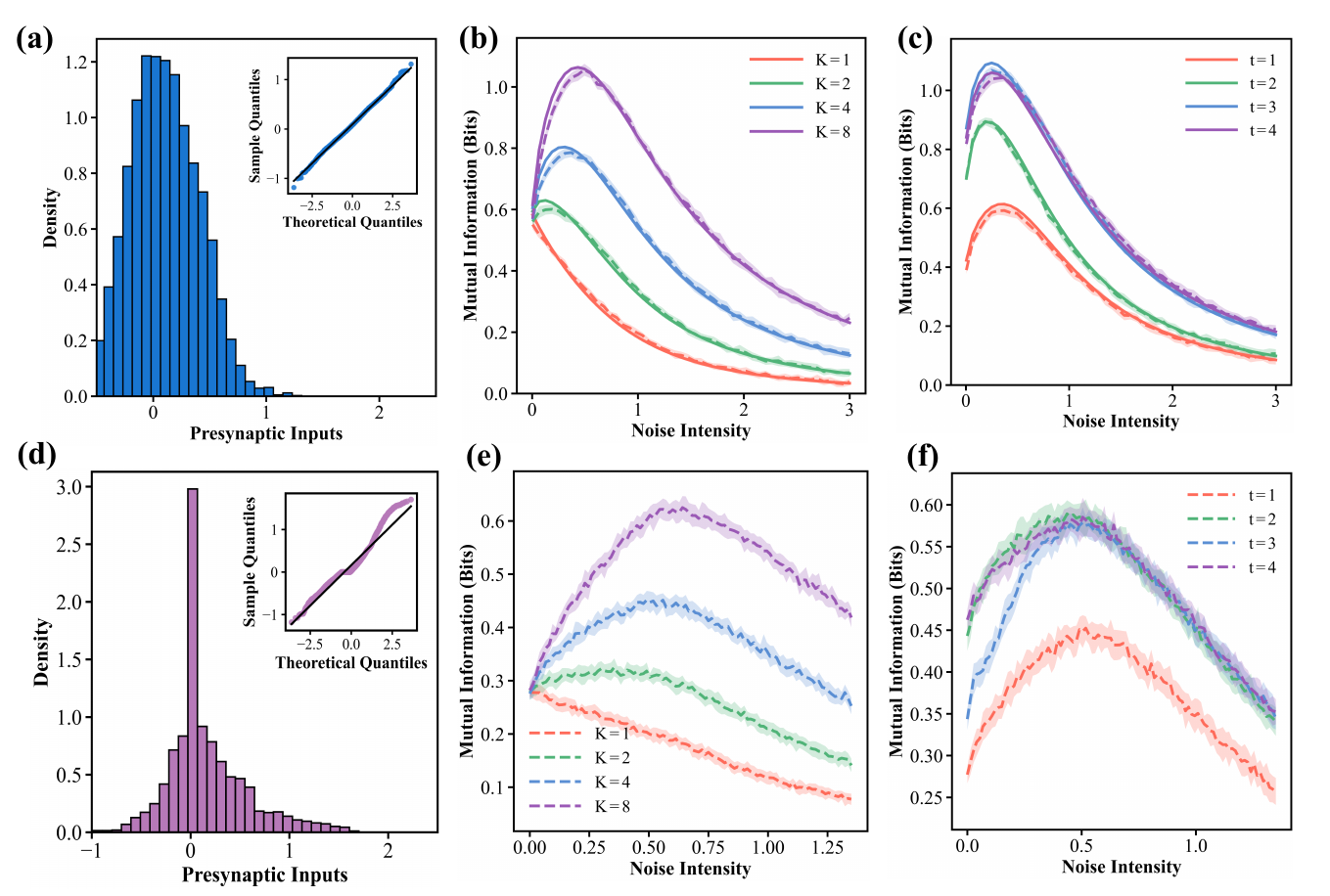}
	\caption{Stochastic resonance behavior of the NGN model characterized by the input-output mutual information $M I\left(I^{t, n-1}, \bar{O}^{t, n}\right)$ under the recorded presynaptic Gaussian inputs and non-Gaussian inputs. The Gaussian presynaptic input (a) and the corresponding input-output mutual information versus noise intensity are shown in the first row under different group sizes or iteration: (b) $t=1$; (c) $K=4$. Here, the theoretical prediction is shown in solid curves and the simulated results are shown in dashed curves. The shaded regions represent one standard deviation of the data. The non-Gaussian input (d) and the corresponding mutual information are shown in the second row under different group sizes or iteration steps: (e) $t=1$; (f) $K=4$. The initial noise intensity for training ResNet-18 is $\sigma_0=0.5$. The quantile-quantile (Q-Q) plot embedded in (a) and (d) signifies how the presynaptic input deviates from Gaussian distribution. The non-monotonic curves in (b), (c), (e) and (f) manifest the occurrence of stochastic resonance.}
	\label{fig2}
\end{figure*}

\begin{figure*}[!t]
	\centering
	\includegraphics[width=6.0in]{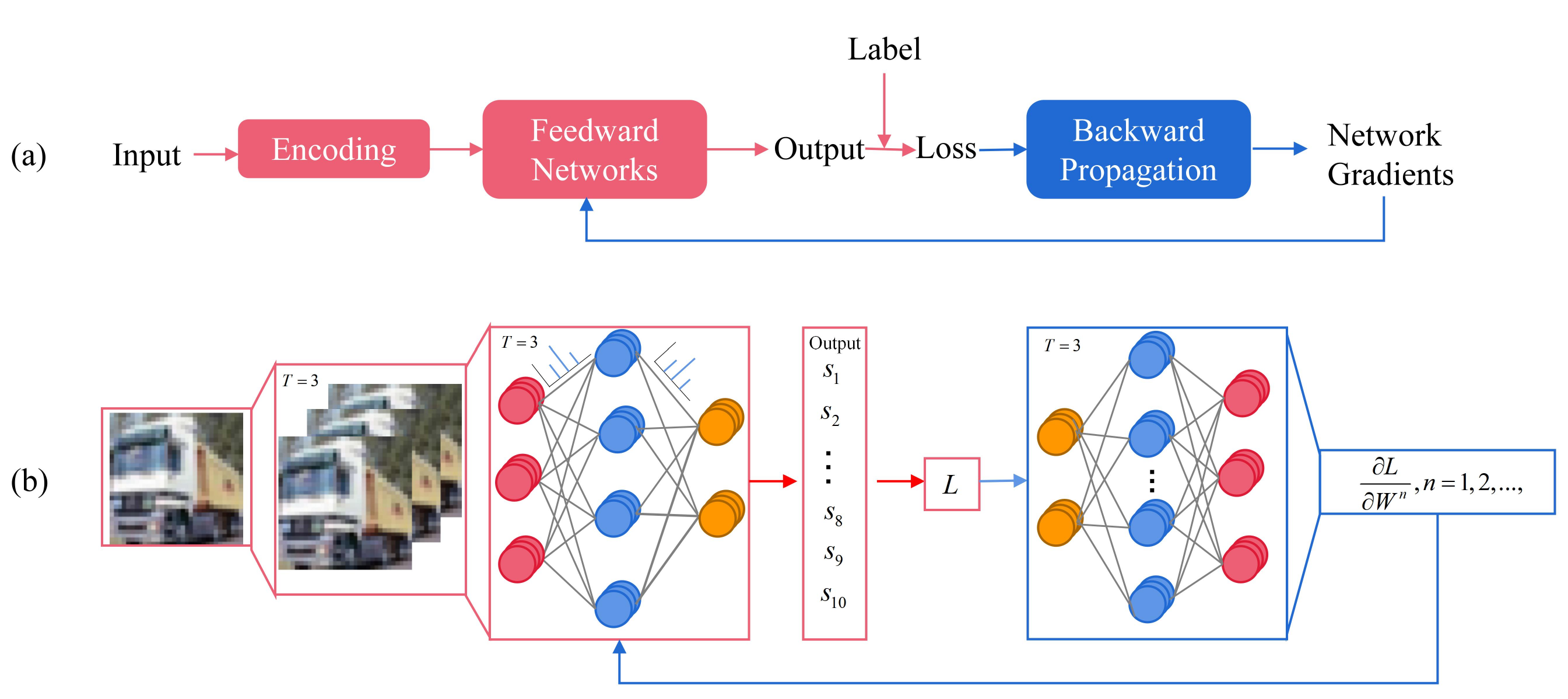}
	\caption{Flowcharts of the fully connected $N$-layer SNNs: (a) a general procedure and (b) the NGN-based procedure. Forward spike propagation and backward loss propagation are symmetric in gradient learning. Here, $\frac{\partial L}{\partial w^n}$ represents the gradient of the loss $L$ with respect to the weights $w^n$ for $n = 1, 2, \ldots$.}
	\label{fig3}
\end{figure*}

\section{Mitigating gradient mismatching in learning process}

Just as in the backpropagation learning of SNNs, the non-differentiability of spiking activities is still an annoying obstacle in the NGN method. The representative resolutions are ANN-to-SNN conversion \cite{ref35} and surrogate gradients (SG) \cite{ref15,ref16}. In SG-based learning, the gradient of spiking impulse is replaced by an appropriate surrogate function so that the backpropagation learning becomes accessible. Note that the SG-based learning requires fewer training steps and accordingly reduces inference costs, but the resultant gradient mismatching may lead to suboptimal outcomes \cite{ref17,ref18}.

In this paper, we adopt the SG-based learning framework and train NGN-based SNNs following the spatial and temporal back-propagation (STBP) paradigm. Importantly, rather than arbitrarily selecting a surrogate function, we derive the Gaussian surrogate function directly from the mean-field approximation established in Eq. (\ref{eq11}). This principled derivation not only provides theoretical justification for the choice of surrogate function but also helps mitigate gradient mismatching through the ensemble structure of the NGN model. For completeness, we now present this derivation.

\subsection{STBP-based learning rules}
Let us illustrate the training algorithm with a fully connected $N$-layer feedforward SNN of NGN models, with architecture shown in Fig.~\ref{fig3}. Given $T > 0$, by taking the firing rate $$s=\frac{1}{T} \sum_{t=1}^T \bar{o}^{t, N}$$ as the output of the network, the loss function $L$ can be defined through the cross-entropy or temporal efficient training (TET) loss \cite{ref43} as $$L=\mathcal{L}\left(s, y\right),$$ where $y$ is the label vector of the desired categories. Since the gradient training is to minimize the training loss function by updating the connecting weights along the negative gradient direction, the forward propagation of spikes and the backward propagation of errors are symmetrical in the gradient-based training, as shown in Fig.~\ref{fig3}. Hence, in order to elucidate the backward propagation of loss, it is imperative to first examine the forward propagation mechanism in the SNN within NGN methods.

As seen from Eqs. (\ref{eq13}) and (\ref{eq14}), each member neuron of the NGN model at $n$th layer receives the same deterministic presynaptic inputs $I^{t, n}=W^n \bar{o}^{t, n-1}$ and evolves instantaneously regulated by the graded spike output $\bar{o}^{t, n}$ and the aggregated membrane potential of
$\bar{v}^{t, n}$ in the NGN model. Here, $\bar{v}^{t, n}$ and $\bar{o}^{t, n}$ represent the integration of firing behaviors of all member neurons. In other words, it is the collective response of NGN models that is designed to dominate the feedforward propagation of the SNN, thus it is reasonable to assume that all the $K$ member neurons share the same gradients $\frac{\partial \bar{o}^{t, n}}{\partial \bar{v}^{t, n}}$. And by STBP \cite{ref15}, the parameter optimization rule for the NGN-based SNN can be expressed as
\begingroup
\setlength{\abovedisplayskip}{7pt}
\setlength{\belowdisplayskip}{7pt}
\begin{equation}
	\label{eq19}
	W^n \leftarrow W^n-\gamma \frac{\partial L}{\partial W^n},
\end{equation}
\endgroup
where
\begin{equation}
	\label{eq20}
	\begin{aligned}
		\frac{\partial L}{\partial W^n}&=\frac{\partial L}{\partial s} \sum_t \frac{\partial s}{\partial \bar{v}^{t, n}} \frac{\partial \bar{v}^{t, n}}{\partial W^n}, \\
		\frac{\partial L}{\partial \bar{u}^{t, n}}&=\frac{\partial L}{\partial \bar{o}^{t, n}} \frac{\partial \bar{o}^{t, n}}{\partial \bar{v}^{t, n}}+\frac{\partial L}{\partial \bar{o}^{t+1, n}} \frac{\partial \bar{o}^{t+1, n}}{\partial \bar{v}^{t, n}},
	\end{aligned}
\end{equation}
and $\gamma$ stands for the learning rate.
Nevertheless, as shown in Fig.~\ref{fig4}(a), the discrete graded spike $\bar{o}^{t, n}$ is stochastic and not everywhere differentiable, hence how to calculate $\frac{\partial \bar{o}^{t, n}}{\partial \bar{v}^{t, n}}$, namely the derivative of the graded spike train with respect to aggregated membrane potential, is still a barrier. 

To resolve this, we utilize the mean-field dynamics in Eq. (\ref{eq11}) to replace the evolution of individual member neurons in Eq. (\ref{eq13}) and Eq. (\ref{eq14}), which naturally provides a differentiable surrogate. That is to say, 
\begin{equation}\frac{\partial \bar{o}^{t, n}}{\partial \bar{v}^{t, n}}=\left.\frac{\partial \bar{O}^{t}}{\partial \bar{V}^{t}}\right|_{\bar{O}^{t}=\bar{o}^{t,n}, \bar{V}^{t}=\bar{v}^{t,n}}. \end{equation}
where the average firing rate $\bar{O}^{t}$ is given as
\begin{equation}
	\begin{aligned}
		\bar{O}^{t}&=E_{\eta^{t}}\left[\Theta\left(\bar{V}^{t}-V_{th}\right)\right]_{\bar{V}^t=\bar{v}^{t,n}}\\
		&=\int_{-\infty}^{\frac{\bar{v}^{t,n}-V_{th}}{\sigma_\eta}} \frac{1}{\sqrt{2 \pi}} \exp \left(-\frac{x^2}{2}\right) d x .
	\end{aligned}
\end{equation}
and the average membrane potential $\bar{V}^{t}=\bar{v}^{t,n}$.
From the above derivation, we obtain
\begin{equation}
	\label{eq23}
	\frac{\partial \bar{o}^{t, n}}{\partial \bar{v}^{t, n}}=\frac{1}{\sqrt{2 \pi} \sigma_\eta} \exp \left(-\frac{\left(\bar{v}^{t, n}-V_{th}\right)^2}{2 \sigma_\eta{ }^2}\right),
	\end{equation}
which exactly aligns with the Gaussian surrogate function in Ref.~\cite{ref15}. Similar to this derivation, when uniform noise $\eta \sim \mathcal{U}[-a, a]$ is injected into Eqs. (\ref{eq13}) and (\ref{eq14}), differentiating the mean-field output yields the rectangular surrogate function \cite{ref15}
\begin{equation*}
	\frac{\partial \bar{o}^{t, n}}{\partial \bar{v}^{t, n}}=\frac{1}{2a}, \quad |\bar{v}^{t,n} - V_{th}| < a.
\end{equation*}
We refer to the NGN model, together with this mean-field dynamics gradient estimation collectively, as the NGN method. With all the preparation in mind, a pseudocode for the NGN method can be summarized as Algorithm \ref{alg1}.
\begin{algorithmhere}[h]
	\caption{Training of SNN based on the NGN method}\label{alg1}
	\begin{algorithmic}
		\STATE \textbf{Input:} Network input and label $(X, Y)$, timestep $T$, group size $K$, neuronal hyperparameters $\{V_m, \tau_m, \Delta t, R_m, \sigma_o\}$, and learning rate $\gamma$ 
		\STATE \textbf{Output:} The trained parameter set $W$, and network output $s$ 
		\STATE \textbf{Forward Pass:}  
		\FOR{$t = 1, 2, \dots, T$}  
		\STATE $o^{t,1} \gets \text{Encoding}(X)$  
		\FOR{the $n$-th layer with $n = 2, \dots, N$}  
		\STATE $\bar{v}^{t,n} \gets \text{GroupUpdate}(\bar{v}^{t-1,n}, \bar{o}^{t-1,n}, I^{t,n})$ //Eq.(\ref{eq13})
		\FOR{$k = 1, \dots, K$}  
		\STATE $\eta_{(k)}^n \gets \text{RandomGenerator}$  
		\STATE $(v_k^{t,n}, o_k^{t,n}) \gets \text{MemberUpdate}(\bar{v}^{t,n}, \eta_{(k)}^n)$  //Eq.(\ref{eq14})
		\ENDFOR  
		\STATE $\bar{o}^{t,n} \gets \text{SpikingAggregation}(o_k^{t,n},K)$  
		\ENDFOR  
		\ENDFOR  
		\STATE $\bar{o}^{t,N} \gets \text{DecodingLayer}(o^{t,N-1})$  
		\STATE $s \gets \text{TimeAverage}(\bar{o}^{t,N})$  
		\STATE $\mathcal{L} \gets \mathcal{L}(s, y)$ 
		\STATE \textbf{Backward Propagation:}  
		\STATE Calculate $\frac{\partial \mathcal{L}}{\partial s}$  
		\FOR{the $n$-th layer with $n = N, N-1, \dots, 1$}  
		\STATE $\frac{\partial \bar{o}^{t,n}}{\partial \bar{v}^{t,n}} \gets \text{CollectiveGrad}(\bar{v}^{t,n}, \xi)$ //Eq.(\ref{eq23})
		\STATE $\frac{\partial \mathcal{L}}{\partial W^n} \gets \text{AutoGrad}(\frac{\partial \bar{o}^{t,n}}{\partial \bar{v}^{t,n}})$  
		\STATE $W^n \gets W^n - \gamma \frac{\partial \mathcal{L}}{\partial W^n}$ 
		\ENDFOR 
	\end{algorithmic}
\end{algorithmhere}
\subsection{Mechanism for mitigating gradient mismatching}

The gradient mismatching issue is characterized by two factors: (1) the divergence between the Dirac-delta function and surrogate gradients, and (2) the incompatibility between discrete binary spikes and continuous differentiable signals (fictitious spikes) produced by surrogate gradients. To quantify this phenomenon, the gray shaded region in Fig.~\ref{fig4} illustrates the level of gradient mismatching in LIF-based SNNs under Gaussian surrogate gradients. In this subsection, we demonstrate how the NGN method effectively reduces gradient mismatching compared to conventional LIF neurons.

In this work, the fictitious spike from the surrogate gradient is defined as the integral of the surrogate gradient function. Ideally, if both forward and backward passes utilized mean-field dynamics of NGN models and surrogate gradients, gradient mismatching would be eliminated. However, in practice, information propagates through SNNs as discrete spike trains generated by the NGN method with finite $K$ (as illustrated in Fig.~\ref{fig4}), whereas surrogate gradients inherently operate on continuous signals. This fundamental discrepancy inevitably introduces a degree of gradient mismatching. \par
\begin{figurehere}
	\centering
	\includegraphics[width=3.15in]{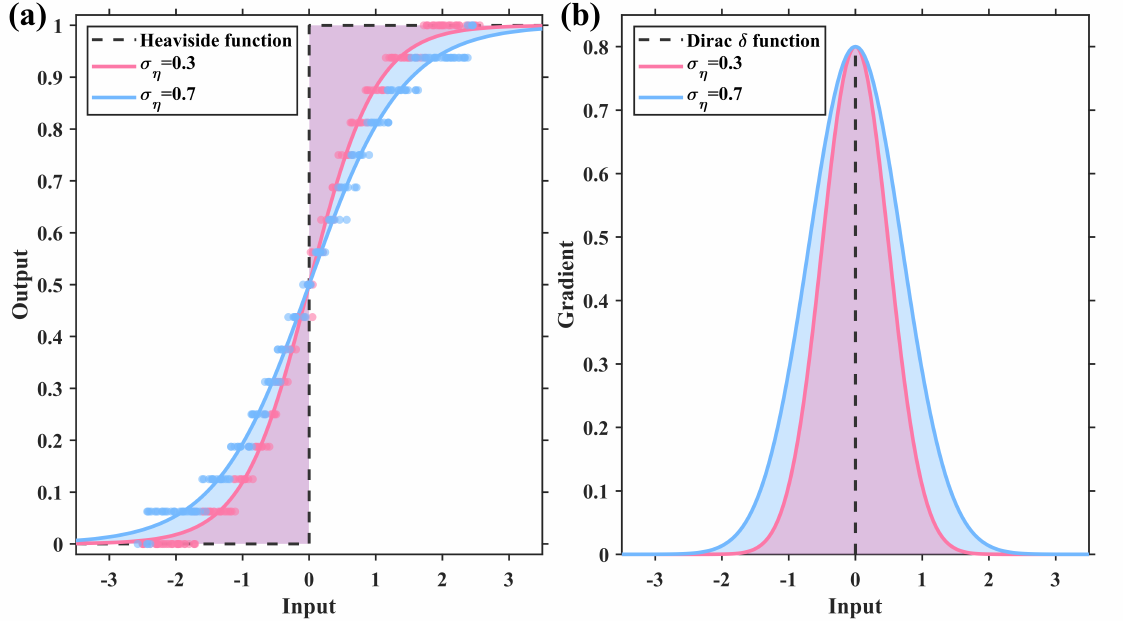}
	\caption{Comparison of the LIF method and the NGN method ($K=16$) in the forward pass (a) and the surrogate function form (b) under different noise intensities $\sigma_\eta$. The complete matching spike is equal to the integral of the surrogate gradient (the solid curve in (a)), whereas the actual graded spikes of NGN methods are represented by scatter points. The shaded area represents the level of gradient mismatching in the vanilla LIF-based SNN. \(\sigma_\eta\) is the scale parameter of surrogate gradients.}
	\label{fig4}
\end{figurehere}

In the LIF-based SNNs, the level of gradient mismatching can be measured by the area of the shaded region \cite{ref18}, as shown in Fig.~\ref{fig4}, but in the NGN method, a different metric is necessary due to the inherent randomness. Note that the gradient mismatching occurs mainly when the membrane potential $v$ is near $V_{th}$, thus for a given interval $VI=\left[V_{th}-L(\varepsilon), V_{th}+L(\varepsilon)\right]$ with $\varepsilon \in\left(0, \frac{1}{2}\right)$, we can define the level of gradient mismatching as
\begin{equation}\label{mis_prob}
	D_{\varepsilon}\left(o, o_0\right)=P\left(\left|o-o_0\right| \geq \varepsilon, v \in V I\right),
\end{equation}
where $o$ denotes the real spike from different neurons generated at $v$ when the threshold is crossed, and $o_0$ is the fictitious spike from surrogate gradient at the same $v$. That is, $D_{\varepsilon}\left(o, o_0\right)$ characterizes the probability that the discrepancy between the real spike and the fictitious spike exceeds $\varepsilon>0$ when $v \in V I$. The smaller the probability, the lower the level of gradient mismatching.
\begin{theorem}
	\label{the2}
	For the Gaussian surrogate function and its corresponding matching spike $o_0$, given any $\varepsilon$, there exists the membrane potential interval $VI$ and sufficiently large $K$ such that the following relationship holds:
	\begin{equation}
		D_{\varepsilon}(o_{NGN}, o_0) < D_{\varepsilon}(o_{LIF}, o_0).
	\end{equation}
\end{theorem}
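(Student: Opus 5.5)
Our strategy is to compare the two mismatch probabilities on one interval. The fictitious spike is $o_0(v)=\Phi\big((v-V_{th})/\sigma_\eta\big)$, the integral of the Gaussian surrogate in Eq.~(\ref{eq23}). We take $VI=[V_{th}-L(\varepsilon),V_{th}+L(\varepsilon)]$ with $L(\varepsilon)=\sigma_\eta\,\Phi^{-1}(1-\varepsilon)$, so that $\varepsilon\le o_0(v)\le 1-\varepsilon$ on $VI$. Since $\varepsilon<1/2$, $L(\varepsilon)>0$. The membrane potential $v$ is regarded as random with the Gaussian density $f_{\bar v}$ of Proposition~\ref{pro1}, or any density that is positive on $VI$.

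\textbf{Lower bound for LIF.} The LIF spike is $o_{LIF}=\Theta(v-V_{th})\in\{0,1\}$. For $v\in VI$ we have $o_0(v)\in[\varepsilon,1-\varepsilon]$, so $|o_{LIF}-o_0|\ge\varepsilon$ for every $v\in VI$. Hence
\begin{equation*}
D_\varepsilon(o_{LIF},o_0)=P(v\in VI)=:p_{VI}>0.
\end{equation*}
If the noisy member neuron ($K=1$) is used instead, its output is still binary, and the same bound holds.

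\textbf{Upper bound for NGN.} By Proposition~\ref{pro2}, conditional on $\bar v=v$ we have $K\bar o\sim B(K,p(v))$ with $p(v)=1-F_\eta\big((V_{th}-v)/\sigma_\eta\big)=o_0(v)$. This identity is the key point: the surrogate is the population probability, so the forward graded spike is an unbiased empirical estimate of $o_0$. Hoeffding's inequality, applied uniformly in $v$, gives $P(|\bar o-o_0(v)|\ge\varepsilon\mid \bar v=v)\le 2e^{-2K\varepsilon^2}$. Chebyshev's inequality would also suffice, giving the bound $p(1-p)/(K\varepsilon^2)\le 1/(4K\varepsilon^2)$. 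Integrating over $v\in VI$ yields
\begin{equation*}
D_\varepsilon(o_{NGN},o_0)\le 2e^{-2K\varepsilon^2}\,p_{VI}.
\end{equation*}
Choosing $K>\ln 2/(2\varepsilon^2)$ then gives a strict inequality, because $p_{VI}>0$.

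\textbf{Main obstacle.} The difficulty lies in justifying that the surrogate's noise scale $\sigma_\eta$ matches the noise injected into the members, so that $p(v)=o_0(v)$ holds exactly. If the two scales differ, a bias $\sup_{VI}|p-o_0|$ appears. In that case I would shrink $VI$, which is allowed since the statement only asks for existence, or restrict to the matched case, which is the one used in the NGN method, and require the bias to be less than $\varepsilon/2$ before applying Hoeffding's inequality with $\varepsilon/2$. A second subtlety is that under synchronous resetting $\bar v$ is itself random, through past $\bar o$ values. The argument conditions on $\bar v$, however, and the members' noises at time $t$ are independent of $\bar v^{t,n}$. So Proposition~\ref{pro2} applies conditionally, and the bound integrates over the law of $\bar v$.
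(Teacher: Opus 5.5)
Your proposal is correct, and it follows the route the paper's setup points to: choosing $L(\varepsilon)$ so that $o_0(v)\in[\varepsilon,1-\varepsilon]$ on $VI$ forces the binary LIF spike to mismatch with probability one there, giving $D_{\varepsilon}(o_{LIF},o_0)=P(v\in VI)$, while Proposition~\ref{pro2} with $p^{t,n}=o_0(\bar v^{t,n})$ and a law-of-large-numbers concentration bound pushes the conditional NGN mismatch probability uniformly below one once $K$ is large (the paper's own proof body is empty in this source, so this comparison is against its evident intended argument rather than its text). Your Hoeffding (or Chebyshev) bound makes the threshold on $K$ explicit, and your remarks on matching the injected noise scale to $\sigma_\eta$ and on conditioning on $\bar v^{t,n}$ cover the two points that need care.
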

\begin{proof}
\end{proof}

Figure~\ref{fig5}(a--c) empirically evaluates Theorem~\ref{the2} using recorded activities from a single-layer CIFAR-10 SNN. NGN with $K\geq2$ yields lower mismatching probability than LIF for $\varepsilon=0.1,0.25,0.4$, and the gap increases with $K$. The ResNet-18 loss curves in Fig.~\ref{fig5}(d--e) likewise show faster optimization and lower final loss on CIFAR-10 and CIFAR-100. In contrast, $K=1$ improves only the small-$\varepsilon$ regime and performs worse for large $\varepsilon$, indicating that effective mitigation requires a nontrivial group.
\begin{figure*}[t]
	\centering
	\includegraphics[width=0.78\textwidth]{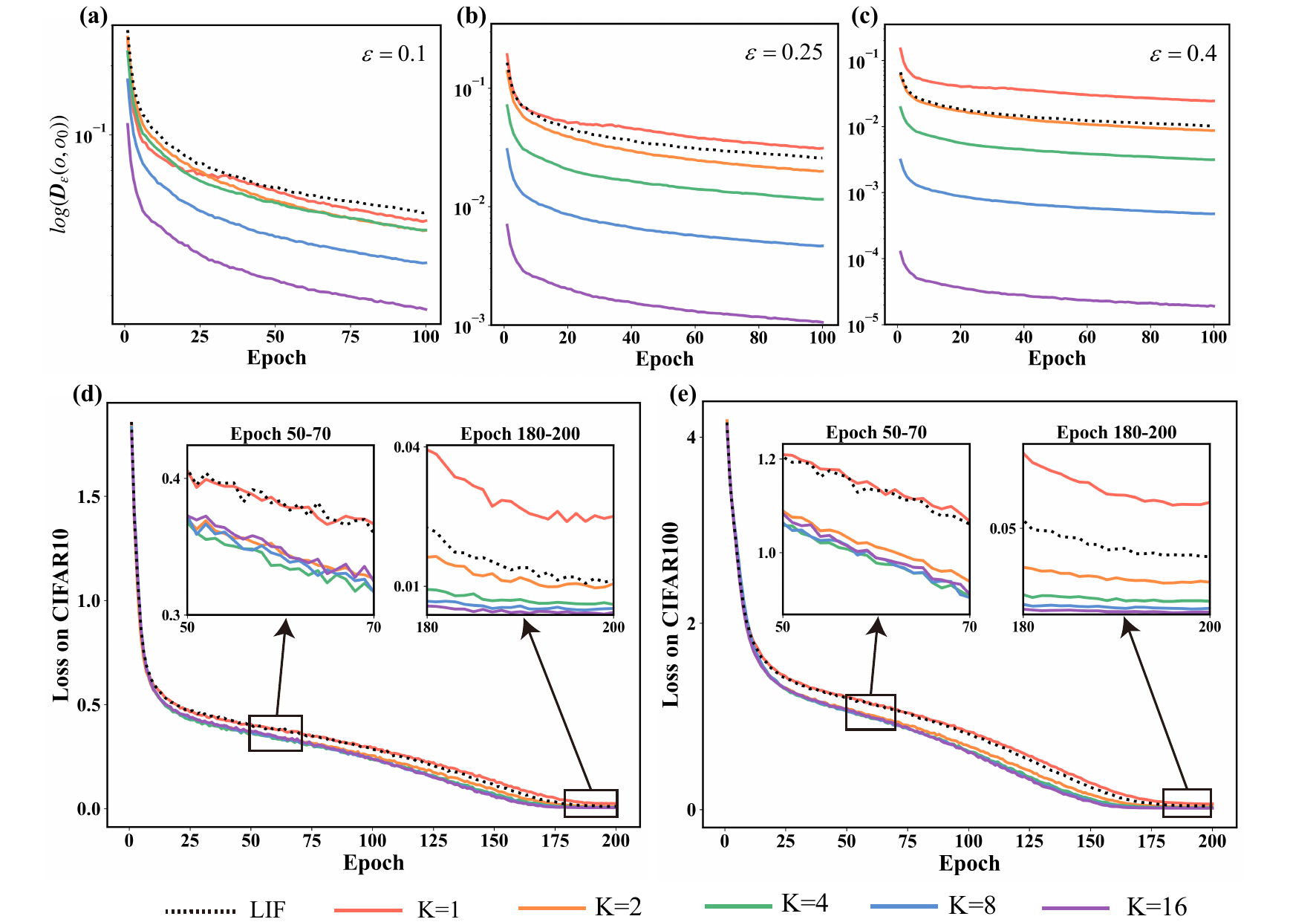}
	\caption{Comparison of gradient mismatching levels and training loss between the baseline LIF method and the NGN method with different sizes $K$. (a-c) The logarithm of the gradient mismatching probability $\log D_{\varepsilon}(o, o_0)$ calculated according to Eq.~(\ref{mis_prob}) for $\varepsilon = 0.1$, $0.25$, and $0.4$, respectively. (d-e) Training loss curves on CIFAR-10 and CIFAR-100 datasets.}
	\label{fig5}
\end{figure*}
\section{Experiments}

We evaluate NGN on static and neuromorphic image datasets, followed by neuron-model, synchronous-resetting, group-size, and computational-cost analyses.
\subsection{Static datasets}
We use ResNet-19 on CIFAR-10/100~\cite{ref45} and VGG-13 on Tiny-ImageNet~\cite{ref46}, with direct encoding~\cite{ref20}. Table~\ref{tab1} shows that NGN remains effective at low latency. Without advanced augmentation, it obtains $94.98\%/95.50\%$ on CIFAR-10 and $78.81\%/79.35\%$ on CIFAR-100 at $T=2/4$. Auto-augmentation and Cutout further raise these results to $96.94\%/97.02\%$ and $82.61\%/83.35\%$, respectively. On Tiny-ImageNet, NGN reaches $63.18\%$ at $T=4$, while combining NGN with TET produces the best reported results in the table: $97.26\%$, $83.88\%$, and $64.70\%$ on the three static benchmarks. These comparisons show that NGN benefits from temporal objectives and data augmentation without relying on long simulation windows.

On CIFAR-10, the NGN-based ResNet-19 improves from $94.98\%$ at $T=2$ to $95.50\%$ at $T=4$ without advanced augmentation. With augmentation, the corresponding results increase to $96.94\%$ and $97.02\%$. A similar pattern occurs on CIFAR-100, where augmentation raises accuracy from $78.81\%$ to $82.61\%$ at $T=2$ and from $79.35\%$ to $83.35\%$ at $T=4$. This consistent improvement indicates that the NGN representation remains compatible with conventional image-level regularization.

On Tiny-ImageNet, NGN reaches $62.21\%$ and $62.59\%$ at $T=2$ and $T=4$ without Cutout, and $62.87\%$ and $63.18\%$ with Cutout. Incorporating TET further improves the three static benchmarks, yielding $97.26\%$ on CIFAR-10, $83.88\%$ on CIFAR-100, and $64.70\%$ on Tiny-ImageNet. Thus, the group-neuron mechanism and temporal loss provide complementary improvements rather than redundant effects.
\begin{table*}[t]  
	\centering  
	\caption{Comparison of the NGN method ($K=16$) with several existing methods on static image datasets, with * denoting the cooperation of Cutout and/or data auto-augmentation techniques.}  
	\label{tab1}
	\small
	\renewcommand{\arraystretch}{1.2}
		\begin{tabular}{ccccc}  
			\hline  
			\textbf{Dataset} & \textbf{Method} & \textbf{Architecture} & \textbf{Time steps} & \textbf{Top1-Accuracy} \\   
			\hline  		
			\multirow{10}{*}{\centering CIFAR-10} 
			& STBP-tdBN \cite{ref40} & ResNet-19 & 6 & 93.16 \\   
			& TET \cite{ref43} & ResNet-19 & 6 & 94.50 \\   
			& IM-LOSS \cite{ref19} & ResNet-19 & 2 & 93.85 \\   
			& NSNN \cite{ref50} & CIFAR Net & 4 & 94.30 \\   
			& Diet-SNN \cite{ref20} & VGG-16 & 4 & 92.76 \\   
			& CLIF+TET \cite{ref51} & ResNet-18 & 8 & 96.69* \\   
			& SML \cite{ref47} & ResNet-19 & 4 & 96.82* \\   
			& TTS \cite{ref52} & ResNet-19 & 2 & 95.80* \\   
			\cline{2-5}
			& \multirow{2}{*}{\textbf{NGN}} & \multirow{2}{*}{\textbf{ResNet-19}} & \textbf{2} & \textbf{94.98$\pm$0.08/ 96.94$\pm$0.08*} \\   
			&  &  & \textbf{4} & \textbf{95.50$\pm$0.10/ 97.02$\pm$0.09*} \\   
			& \textbf{NGN+TET} & \textbf{ResNet-19} & \textbf{4} & \textbf{95.89$\pm$0.08 / 97.26$\pm$0.06*} \\ 
			\hline  
			\multirow{10}{*}{\centering CIFAR-100}   
			& TET \cite{ref43} & ResNet-19 & 6 & 74.72 \\   
			& IM-LOSS \cite{ref19} & VGG-16 & 5 & 70.18 \\   
			& CLIF+TET \cite{ref51} & ResNet-18 & 8 & 80.89* \\   
			& SML \cite{ref47} & ResNet-19 & 4 & 79.18 / 81.70* \\   
			& Diet-SNN \cite{ref20} & VGG-16 & 5 & 69.27 \\   
			& NSNN \cite{ref50} & CIFAR Net & 4 & 74.17* \\   
			\cline{2-5} 
			& \multirow{2}{*}{\textbf{NGN}} & \multirow{2}{*}{\textbf{ResNet-19}} & \textbf{2} & \textbf{78.81$\pm$0.10 / 82.61$\pm$0.10*} \\   
			&  &  & \textbf{4} & \textbf{79.35$\pm$0.14 / 83.35$\pm$0.16*} \\   
			& \textbf{NGN+TET} & \textbf{ResNet-19} & \textbf{4} & \textbf{81.05$\pm$0.19 / 83.88$\pm$0.09*} \\
			\hline  
			\multirow{7}{*}{\centering Tiny-ImageNet}   
			& Spike-thrift \cite{ref53} & VGG-16 & 150 & 51.92 \\   
			& ASGL \cite{ref18} & VGG-13 & 8 & 56.81* \\   
			& CLIF+TET \cite{ref51} & VGG-13 & 4 & 63.16* \\   
			\cline{2-5} 
			& \multirow{2}{*}{\textbf{NGN}} & \multirow{2}{*}{\textbf{VGG-13}} & \textbf{2} & \textbf{62.21$\pm$0.25 / 62.87$\pm$0.17*} \\   
			&  &  & \textbf{4} & \textbf{62.59$\pm$0.19 / 63.18$\pm$0.17*} \\   
			& \textbf{NGN+TET} & \textbf{VGG-13} & \textbf{2} & \textbf{62.61$\pm$0.24 / 63.90$\pm$0.23*} \\   
			&  &  & \textbf{4} & \textbf{63.69$\pm$0.12 / 64.70$\pm$0.13*} \\      
			\hline   
		\end{tabular} 
\end{table*} 
\newcommand{\neurotable}{%
\begin{table*}[t]  
	\caption{Comparison of the NGN method with TET loss implementation with the existing methods on neuromorphic image datasets, with † denoting the implemented results by us.}  
	\label{tab2}  
	\begin{center}  
		\small
		\renewcommand{\arraystretch}{0.9}
			\begin{tabular}{ccccc}  
				\hline  
				\textbf{Dataset} & \textbf{Method} & \textbf{Architecture} & \textbf{Time steps} & \textbf{Top1-Accuracy} \\   
				\hline  
				\multirow{5}{*}{\centering DVS-Gesture}   
				& NSNN+TET \cite{ref50} & 7B-Net & 16 & 96.88 \\   
				& SSNN \cite{ref57} & VGG-9 & 8 & 94.91 \\   
				& TET † \cite{ref43} & SNN-5 & 10 & 95.46 \\   
				& DSGM+DTAM \cite{ref21} & SNN-5 & 10 & 96.69 \\   
				& \textbf{NGN+TET} & \textbf{SNN-5} & \textbf{10} & \textbf{97.88$\pm$0.29} \\   
				\hline  
				\multirow{5}{*}{\centering N-Caltech101}   
				& SSNN \cite{ref57} & VGG-9 & 8 & 79.25 \\   
				& tdBN+NDA \cite{ref58} & VGG-11 & 10 & 78.20 \\   
				& TET † \cite{ref43} & VGG-SNN & 10 & 81.68 \\   
				& DSGM+DTAM \cite{ref21} & VGG-SNN & 10 & 76.39 \\   
				& \textbf{NGN+TET} & \textbf{VGG-SNN} & \textbf{10} & \textbf{84.04$\pm$0.34} \\   
				\hline  
				\multirow{8}{*}{\centering CIFAR10-DVS}   
				& STBP-tdBN \cite{ref40} & ResNet-19 & 10 & 67.8 \\   
				& TET \cite{ref43} & VGG-SNN & 10 & 83.17 \\   
				& IM-Loss \cite{ref19} & ResNet-19 & 10 & 72.60 \\   
				& NSNN+TET \cite{ref50} & VGG-SNN & 10 & 79.52 \\   
				& CLIF+TET \cite{ref51} & VGG-SNN & 10 & 86.1 \\   
				& SML \cite{ref47} & VGG-SNN & 10 & 85.23 \\   
				& TKS \cite{ref60} & ResNet-19 & 10 & 85.3 \\   
				& \textbf{NGN+TET} & \textbf{VGG-SNN} & \textbf{10} & \textbf{87.35$\pm$0.33} \\   
				\hline  
			\end{tabular}  
	\end{center}  
\end{table*}  
}
\subsection{Neuromorphic datasets}
We evaluate DVS-Gesture, N-Caltech101, and CIFAR10-DVS~\cite{ref54,ref55,ref4} using SNN-5 or VGG-SNN. As shown in Table~\ref{tab2}, NGN+TET obtains $97.88\%$, $84.04\%$, and $87.35\%$, respectively, at $T=10$. The gains are consistent across sensor types: on DVS-Gesture it exceeds NSNN+TET by 1.00 percentage point; on N-Caltech101 it improves over TET by 2.36 points; and on CIFAR10-DVS it exceeds TET, TKS, and CLIF+TET. This consistency indicates that the group response complements temporal training objectives rather than depending on one particular event representation.

For DVS-Gesture, NGN+TET exceeds NSNN+TET ($96.88\%$) and the reproduced TET baseline ($95.46\%$). On N-Caltech101, it outperforms SSNN ($79.25\%$), tdBN+NDA ($78.20\%$), and the reproduced TET result ($81.68\%$). These improvements are obtained using the same ten-step observation window.

CIFAR10-DVS provides the most challenging event-based comparison. NGN+TET reaches $87.35\%$, compared with $83.17\%$ for TET, $85.30\%$ for TKS, $85.23\%$ for SML, and $86.10\%$ for CLIF+TET. The results across all three datasets support the use of NGN with different event representations and network backbones.

\neurotable

\subsection{Performance comparison across neuron models}
To evaluate the contribution of NGN, we compare it with alternative neuron models under the same network architecture, loss function, optimizer, learning rate, and initialized weights. 
\gdef\modeltable{%
\begin{tablehere}  
	\vspace*{-10pt}
	\begin{minipage}{\columnwidth}
	\caption{Test accuracy and loss of neuron models on CIFAR datasets using ResNet-18 with TET ($T=4$, $K=8$).}
	\label{tab3}
	\centering
		{\small
		\setlength{\tabcolsep}{9pt}
		\renewcommand{\arraystretch}{1.05}
			\begin{tabular}{@{}lrr@{}}  
				\hline
				\textbf{Neuron} & \multicolumn{1}{c}{\textbf{Accuracy (\%)}} & \multicolumn{1}{c}{\textbf{Loss}} \\   
				\hline
				\multicolumn{3}{c}{\textbf{CIFAR-10}} \\   
				\hline
				Vanilla LIF & 93.94 & 0.3853 \\   
				CLIF & 94.00 & 0.3807 \\   
				Vanilla NLIF & 93.69 & 0.3981 \\   
				Bernoulli NLIF & 92.87 & 0.4056 \\   
				\textbf{NGN} & \textbf{95.17} & \textbf{0.3385} \\   
				\hline
				\multicolumn{3}{c}{\textbf{CIFAR-100}} \\   
				\hline
				Vanilla LIF & 76.54 & 1.18 \\   
				CLIF & 77.07 & 1.20 \\   
				Vanilla NLIF & 74.69 & 1.22 \\   
				Bernoulli NLIF & 73.70 & 1.21 \\   
				\textbf{NGN} & \textbf{78.06} & \textbf{1.09} \\   
				\hline
			\end{tabular}
		}
	\end{minipage}
\end{tablehere}  
}

Table~\ref{tab3} compares deterministic LIF, CLIF, two noisy LIF variants, and NGN under the same ResNet-18/TET setup. Direct Gaussian or Bernoulli noise injection reduces accuracy relative to LIF, showing that noise alone is insufficient. In contrast, NGN reaches $95.17\%$ on CIFAR-10 and $78.06\%$ on CIFAR-100, improving over LIF by 1.23 and 1.52 points.

\modeltable

The two noisy single-neuron baselines are particularly informative. Vanilla NLIF obtains $93.69\%$ and $74.69\%$, while Bernoulli NLIF obtains $92.87\%$ and $73.70\%$ on CIFAR-10 and CIFAR-100, respectively. Both are below deterministic LIF, confirming that the improvement of NGN cannot be attributed to noise injection alone. Population averaging and the matched learning formulation are required to convert noise into a useful computational mechanism.

\subsection{Necessity of synchronous resetting}
Table~\ref{tab_sync_reset} isolates synchronous resetting on CIFAR-10 using ResNet-18, $T=4$, and matched optimization settings. The no-sharing variant retains eight independent membrane histories; the complete NGN uses the shared state in Eqs.~(\ref{eq13})--(\ref{eq14}).

\begin{table*}[ht]
\centering
\small
\setlength{\tabcolsep}{5pt}
\caption{Component ablation on CIFAR-10. Peak memory and time per epoch characterize training cost, and Accuracy reports the best test accuracy over 200 epochs.}
\label{tab_sync_reset}
\begin{tabular}{lcccc}
\hline
Variant & $K$ & Memory (MB) & Time/epoch (s) & Accuracy (\%)\\
\hline
LIF (Gaussian surrogate) & 1 & 2235.8 & 79.2 & 94.23\\
Noisy LIF & 1 & 2237.4 & 81.8 & 93.32\\
NGN with synchronous resetting & 8 & 2253.7 & 129.0 & 95.16\\
NGN w/o shared resetting & 8 & 8933.8 & 791.5 & 95.10\\
\hline
\end{tabular}
\end{table*}

Without shared resetting, peak memory and epoch time increase by 3.96$\times$ and 6.14$\times$, while accuracy changes only from $95.16\%$ to $95.10\%$. Synchronous resetting therefore makes finite-population coding practical by compressing independent temporal histories into one recurrent state.

\subsection{Accuracy--efficiency trade-offs across group sizes} 
To examine how group size affects performance, we vary only the group size under otherwise identical ResNet-18 and optimization settings. We denote the group sizes in the training and testing phases as $K_{\text{train}}$ and $K_{\text{test}}$, respectively. Here, $K_{\text{train}}$ governs learning and gradient propagation, while $K_{\text{test}}$ influences prediction through ensemble-based spatial information aggregation.

\begin{figurehere}
	\centering
	\includegraphics[width=2.50in]{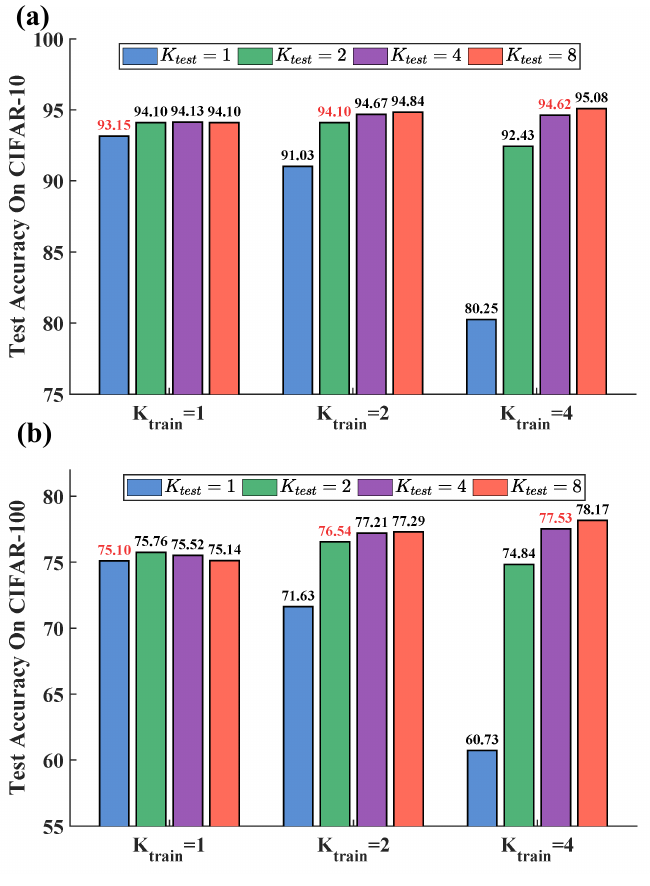}
	\caption{Accuracy under different $K_{\text{train}}$ and $K_{\text{test}}$ on (a) CIFAR-10 and (b) CIFAR-100; red values indicate equal group sizes.}
	\label{fig8}
\end{figurehere}
\vspace*{-2pt}

Figure~\ref{fig8} displays the test accuracy on CIFAR-10 and CIFAR-100 under different combinations of $K_{\text{train}} \in \{1,2,4\}$ and $K_{\text{test}} \in \{1,2,4,8\}$. Three observations can be made. First, when $K_{\text{test}} = K_{\text{train}}$, the NGN method achieves stable accuracy on both datasets. Second, when $K_{\text{test}} > K_{\text{train}}$, the accuracy generally increases. For instance, when $K_{\text{train}}=2$, increasing $K_{\text{test}}$ from 2 to 8 improves the accuracy. Third, when $K_{\text{test}} < K_{\text{train}}$, the accuracy decreases, especially when $K_{\text{test}} < K_{\text{train}}=4$.

These results show that the group size used during inference affects the performance of NGN. A larger $K_{\text{test}}$ can improve accuracy when the model is trained with a smaller group, whereas reducing $K_{\text{test}}$ below $K_{\text{train}}$ may cause an accuracy drop. Independently setting $K_{\text{train}}$ and $K_{\text{test}}$ therefore provides a trade-off between training efficiency and inference accuracy.
\subsection{Energy and memory cost of NGN}
The memory complexity of SNNs during training can be approximated as $O(W + T \cdot (X + H))$~\cite{PSN2023}, where $W$ represents the number of synapses, $T$ denotes the number of time steps, $X$ is the input/output of all layers at a single time step, and $H$ indicates the hidden state of all layers at a single time step. For LIF neurons, the hidden state includes the membrane potential before and after resetting.

For NGN methods, the hidden state is consistent with that of general neurons but requires generating $K \cdot X$ random numbers through pseudo-random number generators. Therefore, the additional memory cost of NGN is $O(T \cdot (K-1) \cdot X)$ compared with LIF neurons. However, the input/output of neurons occupies only a small portion of the overall network memory, as memory is predominantly consumed by synaptic weights $W$. As shown in Table~\ref{tabnew}, maximum memory consumption rises only marginally with group size. For instance, on CIFAR-10, memory usage grows from 29.70 MB/image for LIF to 30.60 MB/image for NGN with $K=16$, an increase of approximately 3\%. Training time per epoch increases more noticeably with $K$, mainly because more pseudo-random numbers must be generated; on CIFAR-10, it rises from 1.13 min/epoch for LIF to 2.46 min/epoch for NGN with $K=16$.

This increase in memory and time usage is unavoidable on conventional GPU platforms. In inherently stochastic neuromorphic hardware, particularly memristor-based neuromorphic chips, the implementation may be more favorable. Memristive devices exhibit intrinsic stochasticity arising from random conductive-filament formation and rupture, ion migration, and thermal fluctuations~\cite{refM1}. Although such stochasticity has traditionally been treated as a challenge requiring mitigation~\cite{refM2}, recent work has explored its computational use~\cite{refM3}. This intrinsic randomness may provide the random-number generation required by NGN without dedicated circuitry.

The additional energy consumption of NGN mainly comes from pseudo-random-number generation and extra synaptic operations. We estimate the latter through synaptic operations (SOps). For ResNet-18, NGN with $K=8$ requires $5.66 \times 10^2$M SOps, compared with $5.58 \times 10^2$M for LIF~\cite{ref60}, corresponding to an increase of approximately 1.43\%. Thus, the measured increase in synaptic operations remains modest.

\begin{tablehere}
	\centering
	\refstepcounter{table}\label{tabnew}
	\noindent\textbf{Table \thetable.}\enskip Memory and training time comparison across LIF and NGN methods with different group sizes $K$.\par\smallskip
	\resizebox{\columnwidth}{!}{%
	\begin{tabular}{lcccc}
		\hline
		 & \multicolumn{2}{c}{\textbf{CIFAR-10}} & \multicolumn{2}{c}{\textbf{CIFAR10-DVS}} \\
		\cline{2-5}
		& \textbf{Time} & \textbf{Memory} & \textbf{Time} & \textbf{Memory} \\
		& \textbf{(min/ep)} & \textbf{(MB/img)} & \textbf{(min/ep)} & \textbf{(MB/img)} \\
		\hline
		LIF & 1.13 & 29.70 & 1.83 & 159.3 \\
		K=1 & 1.32 & 29.70 & 1.87 & 159.3 \\
		K=2 & 1.38 & 29.78 & 1.90 & 159.3 \\
		K=4 & 1.53 & 29.90 & 2.02 & 159.3 \\
		K=8 & 1.83 & 30.14 & 2.26 & 159.3 \\
		K=16 & 2.46 & 30.60 & 2.71 & 160.0 \\
		\hline
	\end{tabular}%
	}
\end{tablehere}

\section{Conclusion}

We proposed a novel NGN model to address spatiotemporal information loss and forward--backward gradient mismatching by combining finite-sample population coding, constructive noise, and population-level synchronous resetting. Based on this NGN model, we have then presented the NGN method to enhance representational capacity and improve training efficiency simultaneously. Experiments on static and neuromorphic datasets show that NGN improves classification performance over deterministic LIF baselines under the evaluated settings. Particularly, it has the following two merits.

The first merit is that the NGN method can enhance the representation and transmission ability of spatiotemporal information, as revealed by theoretical analysis and numerical verification. In fact, this point is not hard to understand, because it can be explained by the effect of parameter-induced stochastic resonance: the ``trained'' optimal parameters can maximally magnify the weak sub-threshold feature in the presence of a suitable amount of noise. The second merit of the NGN method lies in its ability to effectively manage the computational requirements associated with member neurons. Note that in the absence of synchronous resetting, the NGN model is reduced to the parallel array of noisy LIF neurons. In the CIFAR-10 ablation, removing shared resetting increased peak memory by 3.96$\times$ and epoch time by 6.14$\times$, while the two variants reached similar best accuracies after 200 epochs.

The population coding mechanism introduces additional computation, particularly for random-number generation and member-wise sampling. As discussed in Section 5.6, the measured memory and synaptic-operation increases remain modest, whereas training time grows with group size. Future work should therefore focus on more efficient stochastic implementations and hardware-aware control of the training and inference group sizes.

\section*{Acknowledgments}
This research is financially supported by the National Natural Science Foundation of China under Grant Nos. 12572040 and 12172268.

\end{document}